\documentclass[journal]{IEEEtran}

\usepackage{cite}
\usepackage{graphicx}
\usepackage{amsmath}
\usepackage{amssymb}
\usepackage{amsthm}
\usepackage{algorithm}
\usepackage{algorithmic}
\usepackage{booktabs}
\usepackage{multirow}
\usepackage{xcolor}
\usepackage{tikz}
\usepackage{pgfplots}
\usepackage{hyperref}
\pgfplotsset{compat=1.17}
\usetikzlibrary{patterns, calc, positioning, arrows.meta, fit}

\newcommand{\vs}{\textit{vs.}}

\newcommand{\R}{\mathbb{R}}
\newcommand{\E}{\mathbb{E}}
\newcommand{\Normal}{\mathcal{N}}

\newtheorem{proposition}{Proposition}

\title{JPPD: Joint Prediction--Planning Diffusion with Differentiable Safety Guidance for Dynamic Obstacle Avoidance in Intelligent Transportation Systems}

\author{Jiahao Wu and Shengwen Yu%
\thanks{Jiahao Wu is with the Department of Engineering, Mechanical Engineering, The University of Hong Kong, Hong Kong (e-mail: 13620926353@163.com). Shengwen Yu is with the School of Information Technology and Engineering, Computer Science and Technology, Guangzhou College of Commerce, Guangzhou, China (e-mail: 13823343109@163.com). Corresponding author: Jiahao Wu.}}

\begin{document}

\maketitle

\begin{abstract}
Shared-space transportation operation requires low-speed autonomous platforms to navigate safely and efficiently among pedestrians, service robots, micromobility users, carts, and other road users. Most existing systems decompose this problem into trajectory prediction followed by motion planning. This separation creates one-way information flow: predicted participant futures influence the robot plan, but the selected robot plan cannot influence the predicted multi-agent evolution. This paper presents a joint prediction-planning diffusion framework that treats participant prediction and robot planning as a single conditional trajectory generation problem. The model samples the future robot trajectory and all participant trajectories from one coupled distribution using a causal Transformer with cross-trajectory attention. To replace heuristic repulsive post-processing, the framework introduces differentiable safety potential guidance, a time-varying occupancy-probability potential whose gradient directly guides the joint sampler. Conditional flow matching is used to reduce inference steps while preserving multimodal trajectory diversity. The evaluation emphasizes shared-space operational effects, including near misses, blockage time, induced participant deviation, hard-braking events, and embedded latency, rather than treating average displacement error and final displacement error as the main result. Experiments in scenario-grounded simulation, naturalistic pedestrian replay, Isaac Sim validation, and ROSOrin deployment show that joint sampling improves tail safety and runtime efficiency over a separated prediction-then-planning baseline.
\end{abstract}

\begin{IEEEkeywords}
Data-based approaches, Optimization and control, Micro-mobility and sharing mobility, Pedestrian flows and crowds, dynamic obstacle avoidance, diffusion models.
\end{IEEEkeywords}

\section{INTRODUCTION}

\IEEEPARstart{S}{hared-space} transportation operation is becoming a practical intelligent transportation systems problem: sidewalk delivery robots, micromobility users, service carts, wheelchairs, pedestrians, and low-speed automated platforms increasingly share the same corridors, plazas, crosswalk approaches, campuses, and logistics zones~\cite{jennings2019sidewalk, shaheen2022shared, alverhed2024autonomous}. In these weakly structured spaces, safety and efficiency are coupled. A local navigation failure is not only a robot failure; it can create near misses, hard braking, blockage at a bottleneck, or unnecessary deviation by nearby pedestrians and service vehicles. The operational question is therefore how a low-speed autonomous platform can move through mixed traffic while preserving both collision safety and local shared-space throughput.

The central technical difficulty is that prediction and planning are mutually dependent. A safe ego trajectory depends on future participant motion, but participant futures are also shaped by the ego platform's intended motion in narrow passages, blind-corner entries, and crossing conflicts. A delivery robot that yields early may let a pedestrian continue; the same robot taking a different gap may cause the pedestrian to slow, curve, or stop. Treating other agents as fixed obstacles misses this reciprocal structure and can produce plans that are safe against a forecast that is no longer consistent with the selected ego motion.

Most deployed local navigation stacks still approximate this problem with a separated prediction-then-planning pipeline. Classical planners such as A*~\cite{hart1968formal} and RRT~\cite{lavalle1998rapidly} handle static maps, while reactive methods such as Dynamic Window Approach (DWA)~\cite{fox1997dynamic}, Velocity Obstacles (VO)~\cite{fiorini1998motion}, and ORCA~\cite{van2011reciprocal} respond quickly but rely on short-horizon or constant-velocity assumptions. Model predictive control can incorporate forecasts~\cite{brito2019model}, but its behavior depends on how uncertainty is passed into the optimizer. Recent generative models improve trajectory prediction~\cite{salzmann2020trajectron++, gu2022stochastic, mao2023leapfrog, rempe2023trace} and motion planning~\cite{janner2022planning, carvalho2023motion, chi2023diffusion}, yet many diffusion-based systems still generate obstacle futures first and then plan against those frozen futures.

This one-way structure is the gap addressed in this paper. Once obstacle futures are produced, the planner can react to them but cannot reshape them during ego trajectory generation. Safety is often added afterward through a heuristic repulsive field or a post-hoc score, and running separate generative models for prediction and planning increases latency on embedded hardware. In shared-space transportation operation, these limitations matter most in the tail: two methods can have similar obstacle ADE/FDE while producing different near-miss, blockage, hard-braking, and collision outcomes.

We introduce \textbf{JPPD} (\textbf{J}oint \textbf{P}rediction--\textbf{P}lanning \textbf{D}iffusion) as a joint generative solution to this shared-space operation problem. Instead of sampling obstacle futures and ego plans from separate models, JPPD samples the robot and traffic-participant futures together from one conditional distribution. The model's latent variable is a joint agent-time tensor containing the ego trajectory and all participant trajectories. A causal diffusion Transformer denoises all trajectories in one network, and a cross-trajectory attention mask lets ego and participant tokens exchange information during every sampling step. Safety guidance is also internal to sampling: DSPG learns a differentiable, time-indexed occupancy potential from observed and generated trajectories, and the gradient of that potential modifies the vector field used by the sampler.

\begin{figure*}[t]
    \centering
    \includegraphics[width=\textwidth]{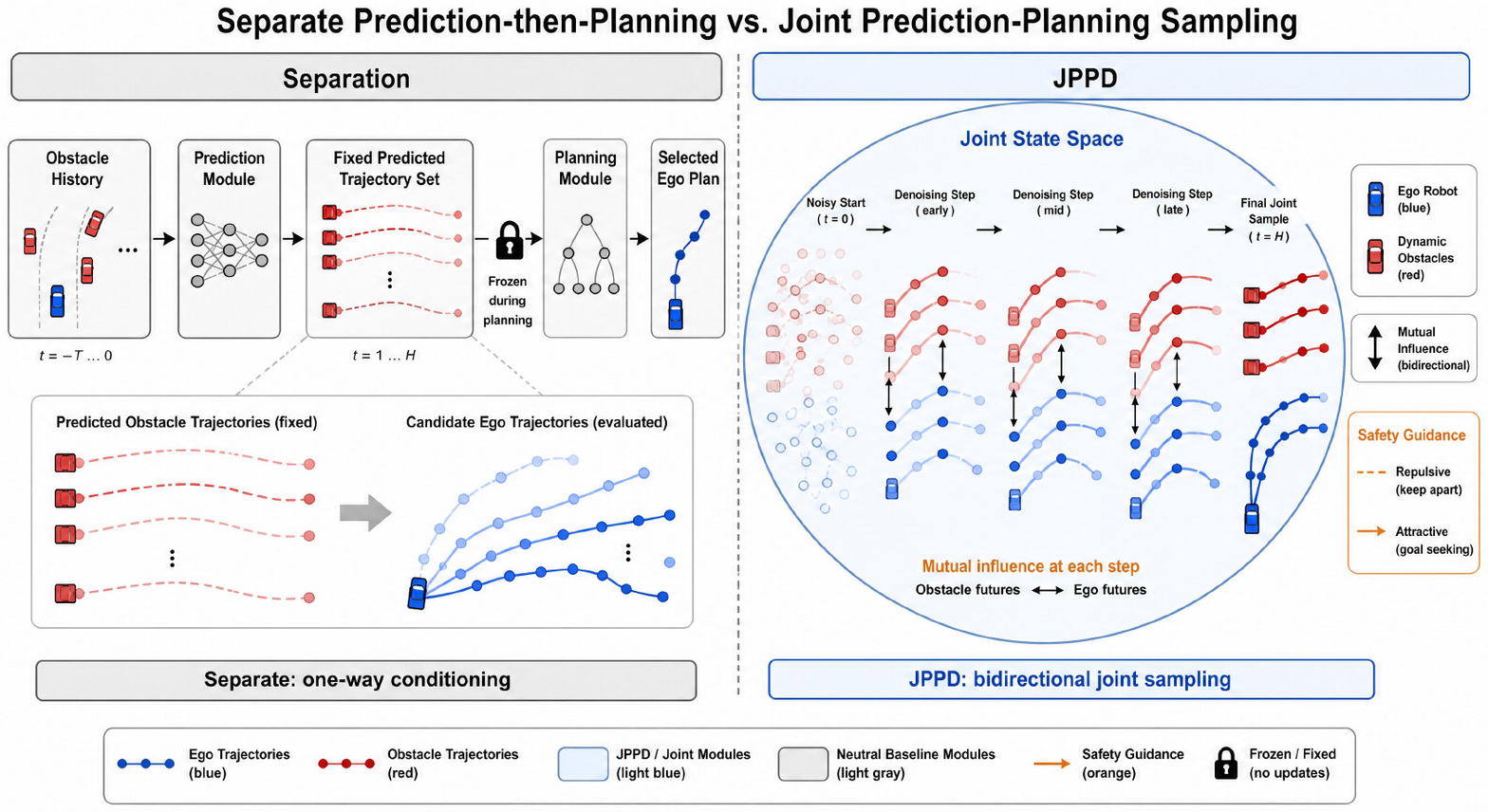}
    \caption{Problem setting and modeling contrast. A separated prediction-then-planning pipeline freezes obstacle futures before the planner evaluates ego candidates, creating one-way conditioning. JPPD instead samples ego and obstacle futures in a shared joint state space, allowing bidirectional interaction hypotheses to evolve during denoising.}
    \label{fig:separation_joint}
\end{figure*}

Fig.~\ref{fig:separation_joint} summarizes this distinction. The lock in the separated pipeline highlights the source of prediction-planning inconsistency: once obstacle futures are produced, the planner can only react to them. The joint sampler removes this boundary by letting ego and obstacle futures co-evolve before the risk-aware selector chooses an executable ego segment.

\subsection{Contributions}

\begin{enumerate}
    \item \textbf{Joint Prediction--Planning Diffusion (JPPD):} a conditional generative formulation that samples the ego trajectory and all obstacle trajectories from a single joint distribution, replacing the previous two-stage prediction-then-planning pipeline.

    \item \textbf{Cross-trajectory causal diffusion Transformer (CDiT):} a lightweight architecture with agent-time tokens, variable-agent masking, adaptive diffusion-time normalization, and bidirectional attention between ego and obstacle futures.

    \item \textbf{Differentiable Safety Potential Guidance (DSPG):} a learned time-varying occupancy potential whose gradient is injected into the sampler, replacing fixed APF-style repulsion with differentiable risk-aware guidance.

    \item \textbf{T-ITS-oriented evaluation protocol:} scenario-grounded shared-space simulation, naturalistic replay, Isaac Sim, and real-robot results with operational metrics, plus ablations that directly test the reviewers' novelty concerns, including bidirectional \vs{} unidirectional coupling, DSPG \vs{} fixed repulsion, and flow matching \vs{} DDPM.
\end{enumerate}

\noindent Code, trained models, simulation configurations, and real-robot trial logs will be publicly released upon acceptance.

\section{RELATED WORK}

Trajectory prediction has progressed from social-force models~\cite{helbing1995social} to recurrent, adversarial, Transformer, and diffusion predictors~\cite{alahi2016social, gupta2018social, salzmann2020trajectron++, yuan2021agentformer, gu2022stochastic, mao2023leapfrog, rempe2023trace, chen2024trajdiffuse}. These models represent multimodal futures, but they are often evaluated as standalone predictors and do not make the robot's executable plan part of the generated state. Diffusion and flow models have also been used for planning and optimization warm-starting~\cite{janner2022planning, carvalho2023motion, chi2023diffusion, lipman2023flow, zhong2024flowmatching}, while dynamic avoidance methods range from velocity-obstacle and reciprocal collision-avoidance rules~\cite{fiorini1998motion, van2008reciprocal, van2011reciprocal, bareiss2015generalized} to learning-based social navigation~\cite{chen2017socially, everett2018motion, chen2019crowd, samsani2021socially} and safety-aware optimization~\cite{lew2023risk, zhu2021safety, mizuta2024cobl, xiao2023safediffuser, yang2024safety}.

Coupled prediction and planning has been studied through conditional multi-agent forecasting, game-theoretic planning, and bilevel optimization. AgentFormer-style forecasting learns future social dependencies but usually leaves ego planning outside the generative variable~\cite{yuan2021agentformer}. SICNav and related bilevel methods create tight coupling through nested optimization~\cite{holtz2024sicnav, holtz2024sicnavdiff}, but their runtime depends on explicit solvers. JPPD occupies a different point in this design space: it learns a joint ego-participant sampler offline, uses conditional flow matching online, and injects a differentiable safety gradient into the same joint state. Compared with the previous BLADE-style prediction-then-planning pipeline, obstacle futures are not frozen before the ego plan is generated, which is the key distinction needed for low-speed shared-space operation.

\section{PROBLEM FORMULATION}

Consider a mobile robot or low-speed autonomous vehicle navigating in a shared-space transportation workspace $\mathcal{W}\subset\R^2$ with $N$ nearby traffic participants. Participants can be pedestrians, other delivery robots, carts or wheelchairs, micromobility users, or static props that constrain the local flow. For compact notation, we use ``obstacle'' to denote any non-ego participant whose future motion affects the ego vehicle. At environment time $t$, the robot state is $\mathbf{x}_t\in\R^2$, the goal is $\mathbf{g}\in\R^2$, and participant $i$ has observed position $\mathbf{o}^{(i)}_t\in\R^2$ and footprint radius $r_i$. The robot radius is $r_r$. The observation history is
\begin{equation}
    \mathcal{H}_t=\{\mathcal{O}_{t-H+1},\ldots,\mathcal{O}_t\},\quad
    \mathcal{O}_t=\{\mathbf{o}^{(1)}_t,\ldots,\mathbf{o}^{(N)}_t\}.
\end{equation}

The planning horizon contains $K$ future support points at a fixed control interval $\Delta t$. The ego trajectory is
\begin{equation}
    \boldsymbol{\tau}^r = [\mathbf{x}_{t+1},\ldots,\mathbf{x}_{t+K}] \in \R^{K\times 2},
\end{equation}
and the future trajectory of obstacle $i$ is
\begin{equation}
    \boldsymbol{\tau}^{o,i} =
    [\mathbf{o}^{(i)}_{t+1},\ldots,\mathbf{o}^{(i)}_{t+K}] \in \R^{K\times 2}.
\end{equation}

JPPD defines a joint future tensor
\begin{equation}
    \mathbf{Y} =
    [\boldsymbol{\tau}^r,\boldsymbol{\tau}^{o,1},\ldots,\boldsymbol{\tau}^{o,N}]
    \in \R^{(N+1)\times K\times 2}.
\end{equation}
The conditioning variables are
\begin{equation}
    \mathbf{c}_t = (\mathbf{x}_t,\mathbf{g},\mathcal{H}_t,\mathcal{M},\mathbf{m}),
\end{equation}
where $\mathcal{M}$ denotes map or LiDAR occupancy context and $\mathbf{m}\in\{0,1\}^{N_{\max}}$ is an agent-presence mask. The objective is to sample from
\begin{equation}
    p(\mathbf{Y}\mid \mathbf{c}_t),
\end{equation}
then execute the first segment of $\boldsymbol{\tau}^r$ in a receding-horizon loop while maintaining
\begin{equation}
    \|\mathbf{x}_{t+k}-\mathbf{o}^{(i)}_{t+k}\| > r_r+r_i+\delta,\quad \forall k,i,
\end{equation}
with safety margin $\delta>0$, goal convergence $\|\mathbf{x}_{t+K}-\mathbf{g}\|<\epsilon$, and bounded curvature and velocity.

\section{METHOD}
\label{sec:method}

JPPD has four components: a joint conditional generative formulation, a causal diffusion Transformer for bidirectional coupling, differentiable safety potential guidance, and risk-aware trajectory selection. Fig.~\ref{fig:architecture} summarizes the receding-horizon information flow.

\begin{figure*}[t]
    \centering
    \includegraphics[width=\textwidth]{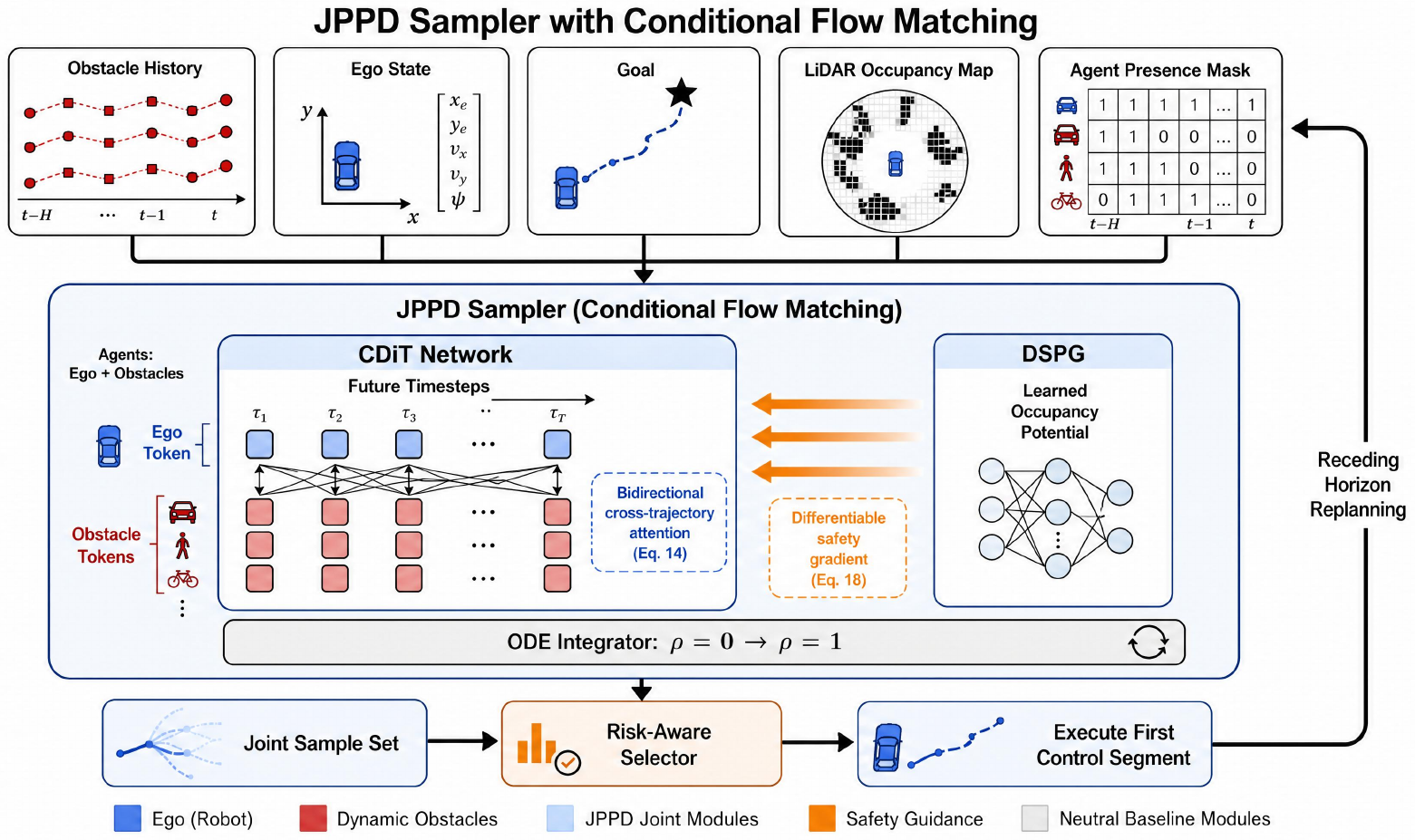}
    \caption{System architecture of JPPD. The sampler receives obstacle history, ego state, goal, LiDAR occupancy context, and an agent-presence mask, then integrates a conditional flow-matching vector field over the joint future tensor. CDiT provides bidirectional cross-trajectory attention, while DSPG injects a differentiable safety gradient before the risk-aware selector executes the first segment in a receding-horizon loop.}
    \label{fig:architecture}
\end{figure*}

The architecture in Fig.~\ref{fig:architecture} is intentionally a single sampling loop rather than a perception-prediction-planning cascade. CDiT operates on agent-time tokens, so the ego and obstacle futures are updated by the same vector field. DSPG enters the loop as a gradient on the generated joint state, which makes safety guidance part of sampling instead of a post-hoc trajectory filter.

\subsection{Joint Prediction--Planning Diffusion}

The central modeling change is to place the ego trajectory and obstacle trajectories in one random variable $\mathbf{Y}$. We instantiate the joint diffusion process with conditional flow matching because it enables 10--20 sampling steps in practice while retaining a continuous generative interpretation.

The training sample for JPPD is constructed from synchronized ego-obstacle rollouts. For each rollout we store the current ego state, goal, local occupancy context, obstacle histories, and the realized future trajectories of the ego and all visible obstacles. Obstacles are sorted by distance to the robot and placed into $N_{\max}$ slots; missing slots are filled by a learned null-agent token and removed from losses through the mask $\mathbf{m}$. All positions are expressed in the robot-centric frame at time $t$, which makes the distribution invariant to global translation and heading. The first ego waypoint is constrained to be dynamically reachable from $\mathbf{x}_t$ under the platform velocity limit, while later waypoints are interpreted as spline support points for the low-level controller.

Let $\mathbf{Y}_0$ denote a clean joint trajectory from the training set and $\mathbf{Z}\sim\Normal(\mathbf{0},\mathbf{I})$ denote Gaussian noise with the same shape. For flow time $\rho\in[0,1]$, we define the interpolation
\begin{equation}
    \mathbf{Y}_{\rho} = (1-\rho)\mathbf{Z} + \rho\mathbf{Y}_0,
\end{equation}
whose target velocity is
\begin{equation}
    \mathbf{u} = \frac{d\mathbf{Y}_{\rho}}{d\rho} = \mathbf{Y}_0-\mathbf{Z}.
\end{equation}
The conditional vector field $v_\theta$ is trained by
\begin{equation}
    \mathcal{L}_{\mathrm{FM}} =
    \E_{\rho,\mathbf{Y}_0,\mathbf{Z}}
    \left[
    \left\|
    v_\theta(\mathbf{Y}_{\rho},\rho,\mathbf{c}_t)-(\mathbf{Y}_0-\mathbf{Z})
    \right\|_2^2
    \right],
    \label{eq:flow_loss}
\end{equation}
where $\rho\sim\mathcal{U}(0,1)$. At inference, we start from $\mathbf{Y}^{(0)}\sim\Normal(\mathbf{0},\mathbf{I})$ and integrate
\begin{equation}
    \frac{d\mathbf{Y}}{d\rho} = v_\theta(\mathbf{Y},\rho,\mathbf{c}_t),\quad \rho:0\rightarrow 1.
\end{equation}
The solver index is denoted by $s$ to avoid confusion with environment time $t$.

This formulation removes the previous boundary between an ``upper-level'' predictor and a ``lower-level'' planner. Obstacle futures and the ego future are denoised together, and gradients from planning-related losses update the same network parameters that model obstacle futures.

Compared with a standard DDPM parameterization, the flow formulation is advantageous for embedded navigation in two ways. First, the model learns a velocity field that can be integrated with a small number of ODE steps, directly reducing per-cycle latency. Second, guidance terms such as DSPG can be added to the vector field without changing the stochastic reverse-process variance schedule. This is important for receding-horizon control because the sampler must be re-executed every control cycle and cannot rely on hundreds of denoising steps.

\subsection{Cross-Trajectory Causal Diffusion Transformer}

The denoising network is a causal diffusion Transformer (CDiT). Each token is indexed by agent $a\in\{0,\ldots,N_{\max}\}$ and future step $k\in\{1,\ldots,K\}$, where $a=0$ is the ego agent. A token embedding is
\begin{equation}
    \mathbf{e}_{a,k} =
    W_y \mathbf{Y}_{\rho,a,k} +
    E_{\mathrm{agent}}(a) +
    E_{\mathrm{time}}(k) +
    E_{\mathrm{cond}}(\mathbf{c}_t).
\end{equation}
The flow time $\rho$ is injected through adaptive layer normalization:
\begin{equation}
    \mathrm{AdaLN}(\mathbf{z},\rho)=
    \gamma_\rho\frac{\mathbf{z}-\mu(\mathbf{z})}{\sigma(\mathbf{z})}+\beta_\rho,
\end{equation}
where $(\gamma_\rho,\beta_\rho)$ are produced by a small MLP from a sinusoidal embedding of $\rho$.

The attention mask has three roles. First, it blocks padded agents using $\mathbf{m}$. Second, it preserves temporal causality for observed history tokens. Third, it enables bidirectional future coupling: ego future tokens can attend to obstacle future tokens at the current noisy state, and obstacle future tokens can attend to ego future tokens. This differs from one-way conditioning, where obstacle predictions are fixed before planning starts.

Let $A_{(a,k),(b,l)}\in\{0,-\infty\}$ be the additive attention mask. For future tokens, JPPD uses
\begin{equation}
    A_{(a,k),(b,l)} =
    \begin{cases}
        0, & \mathbf{m}_a=\mathbf{m}_b=1,\ l\leq k+\Delta_{\mathrm{ctx}},\\
        -\infty, & \text{otherwise},
    \end{cases}
\end{equation}
where $\Delta_{\mathrm{ctx}}$ allows short local look-ahead within the noisy future tensor. Setting ego-to-obstacle entries to $-\infty$ recovers the unidirectional ablation.

The bidirectional mask is the core architectural distinction from the previous BLADE pipeline. In BLADE-Separate, obstacle futures are generated once and then treated as fixed conditions for planning. In JPPD, the ego token at $(0,k)$ can attend to obstacle hypotheses at $(i,l)$ and obstacle tokens can simultaneously attend back to the ego hypothesis. The model can therefore represent interaction modes such as ``ego yields and obstacle continues'' and ``ego passes first and obstacle slows'' within the same sample set. The unidirectional ablation keeps the same parameter count and solver but removes the ego-to-obstacle attention edges, isolating the contribution of bidirectional coupling.

\textbf{Complexity.} With dense attention over $(N_{\max}+1)K$ future tokens, the nominal attention cost is $O((N_{\max}K)^2D)$. In our target setting, $N_{\max}=12$ and $K=16$, so the token count remains below 208 and the Transformer is not the runtime bottleneck. For denser traffic, the same formulation supports sparse attention by restricting cross-agent attention to spatial neighbors within radius $r_{\mathrm{attn}}$ and temporal offsets $|k-l|\leq\Delta_{\mathrm{ctx}}$.

\subsection{Differentiable Safety Potential Guidance}

The previous BLADE draft used a fixed repulsive potential inspired by APF~\cite{khatib1986real}. JPPD replaces that heuristic with a learned time-varying occupancy potential. A compact network $\phi_\psi$ maps a space-time query and context to occupied log-odds:
\begin{equation}
    \ell_\psi(\mathbf{p},k,\mathbf{c}_t)=\phi_\psi(\mathbf{p},k,\mathbf{c}_t),\quad
    P_\psi(\mathbf{p},k\mid\mathbf{c}_t)=\sigma(\ell_\psi).
\end{equation}
Training samples positive space-time points from obstacle footprints and negative points from free space, using
\begin{equation}
    \mathcal{L}_{\mathrm{occ}} =
    - y\log P_\psi(\mathbf{p},k)
    -(1-y)\log(1-P_\psi(\mathbf{p},k)).
\end{equation}

For a generated joint sample, the differentiable safety cost is
\begin{align}
    U_{\mathrm{safe}}(\mathbf{Y}) &=
    \sum_{k=1}^{K}
    \mathrm{softplus}\!\left(\alpha(P_\psi(\mathbf{x}_k,k)-p_{\max})\right) \nonumber\\
    &+
    \sum_{k=1}^{K}\sum_{i=1}^{N}
    \mathrm{softplus}\!\left(
	    \alpha\!(r_r+r_i+\delta-\|\mathbf{x}_k-\mathbf{o}^{(i)}_k\|)
    \right).
    \label{eq:safety_cost}
\end{align}
The first term penalizes high learned occupancy probability at ego waypoints; the second term keeps explicit pairwise clearance when tracked obstacles are available.

The occupancy network is trained online or offline from the same trajectory batches used by the joint sampler. Positive labels are generated by inflating each obstacle footprint by $r_i+\delta$ at each future time index. Negative labels are drawn from free cells along collision-free ego trajectories and from uniformly sampled map locations outside inflated footprints. This supervision makes DSPG different from fixed APF: the potential is time-varying, conditioned on the observed scene, and differentiable with respect to the generated ego waypoints. In practice, $\phi_\psi$ is small compared with CDiT and adds little inference overhead because its gradients are evaluated only on sampled ego support points.

During sampling, DSPG modifies the vector field as
\begin{equation}
    \tilde{v}_{\theta,\psi}(\mathbf{Y},\rho,\mathbf{c}_t)
    =
    v_\theta(\mathbf{Y},\rho,\mathbf{c}_t)
    -
    \lambda_{\mathrm{safe}}(\rho)
    \nabla_{\mathbf{Y}}U_{\mathrm{safe}}(\mathbf{Y}),
    \label{eq:guided_vector_field}
\end{equation}
where $\lambda_{\mathrm{safe}}(\rho)$ increases near the final denoising stage so that early samples retain diversity and late samples satisfy safety constraints.

We use a ramped guidance scale
\begin{equation}
    \lambda_{\mathrm{safe}}(\rho)=
    \lambda_{\max}\cdot \mathrm{clip}\left(\frac{\rho-\rho_0}{1-\rho_0},0,1\right),
\end{equation}
where $\rho_0$ is the flow time at which safety guidance begins. Early in sampling, the joint future is too noisy for geometric gradients to be meaningful; late guidance shapes samples after the model has already selected a plausible interaction mode. This mirrors classifier-free guidance practice in generative modeling but uses a safety energy rather than a semantic classifier.

\subsection{CLF Regularization and Risk-Aware Selection}

To improve goal convergence, we add a control-Lyapunov-style penalty
\begin{equation}
    V_k = \|\mathbf{x}_{t+k}-\mathbf{g}\|_2^2,
\end{equation}
\begin{equation}
    \mathcal{L}_{\mathrm{CLF}} =
    \sum_{k=1}^{K-1}
    \max(0,\ V_{k+1}-V_k+\eta_V V_k).
\end{equation}
This term does not replace low-level control, but it biases generated trajectories toward monotone goal progress.

The complete training loss is
\begin{equation}
    \mathcal{L} =
    \mathcal{L}_{\mathrm{FM}}
    +\lambda_{\mathrm{occ}}\mathcal{L}_{\mathrm{occ}}
    +\lambda_{\mathrm{safe}}^{\mathrm{tr}}\E[U_{\mathrm{safe}}(\hat{\mathbf{Y}}_0)]
    +\lambda_V\mathcal{L}_{\mathrm{CLF}},
    \label{eq:joint_loss}
\end{equation}
where the clean estimate under the linear flow parameterization is
\begin{equation}
    \hat{\mathbf{Y}}_0 =
    \mathbf{Y}_{\rho}+(1-\rho)v_\theta(\mathbf{Y}_{\rho},\rho,\mathbf{c}_t).
\end{equation}

At inference, JPPD samples $M$ joint futures. The executed ego trajectory is selected by
\begin{equation}
    J(\boldsymbol{\tau}^r) =
    w_g d_g + w_l L + w_s S + w_r R + w_u U,
\end{equation}
where $d_g$ is terminal goal distance, $L$ is path length, $S$ is curvature/turning cost, $R$ is expected safety cost under DSPG, and $U$ is uncertainty, estimated from the dispersion of obstacle futures across joint samples.

The uncertainty term is computed as the trace of the empirical covariance of obstacle positions near each ego waypoint, weighted by inverse ego-obstacle distance. This makes the selector conservative near ambiguous obstacles but avoids uniformly slowing the robot in low-risk regions. We execute only the first control segment and then replan, so selection errors can be corrected at the next cycle as new LiDAR observations arrive.

\subsection{Sampler}

\begin{algorithm}[t]
\caption{JPPD Sampling with DSPG}
\label{alg:jppd}
\begin{algorithmic}[1]
\REQUIRE Context $\mathbf{c}_t$, solver steps $S$, samples $M$
\STATE Draw $\mathbf{Y}^{(0,m)}\sim\Normal(\mathbf{0},\mathbf{I})$ for $m=1,\ldots,M$
\FOR{$s=0$ to $S-1$}
    \STATE $\rho_s \leftarrow s/S$
    \STATE Compute $v_\theta(\mathbf{Y}^{(s,m)},\rho_s,\mathbf{c}_t)$ with CDiT
    \STATE Compute $\nabla_{\mathbf{Y}}U_{\mathrm{safe}}(\mathbf{Y}^{(s,m)},\mathbf{c}_t)$
    \STATE $\tilde{v} \leftarrow v_\theta-\lambda_{\mathrm{safe}}(\rho_s)\nabla_{\mathbf{Y}}U_{\mathrm{safe}}$
    \STATE $\mathbf{Y}^{(s+1,m)} \leftarrow \mathrm{ODEStep}(\mathbf{Y}^{(s,m)},\tilde{v},1/S)$
\ENDFOR
\STATE Select $\boldsymbol{\tau}^{r,*}$ by risk-aware score $J$
\STATE Execute the first control segment and replan at $t+1$
\end{algorithmic}
\end{algorithm}

\subsection{Theoretical Interpretation}

\begin{proposition}[Guided joint posterior]
Assume $v_\theta$ equals the exact conditional probability-flow vector field for $p_\rho(\mathbf{Y}\mid\mathbf{c}_t)$ and $U_{\mathrm{safe}}$ is continuously differentiable with Lipschitz gradient. For constant guidance strength $\lambda$, the Langevin version of the guided sampler associated with \eqref{eq:guided_vector_field} has stationary density proportional to
\begin{equation}
    p(\mathbf{Y}\mid\mathbf{c}_t)\exp(-\lambda U_{\mathrm{safe}}(\mathbf{Y})).
\end{equation}
Thus DSPG samples from a safety-tilted joint prediction-planning posterior rather than applying a post-hoc APF correction.
\end{proposition}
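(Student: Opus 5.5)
The plan is to make precise what ``the Langevin version of the guided sampler'' means, and then verify stationarity of the tilted density through the Fokker--Planck equation.

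\textbf{Step 1: identify the drift.} At the terminal flow time the target is $p(\mathbf{Y}\mid\mathbf{c}_t)=p_1$. The exact probability-flow field only transports mass; it does not define an ergodic dynamics at fixed $\rho$. So I will take the Langevin version to be the overdamped diffusion whose drift is the data score plus the guidance term appearing in \eqref{eq:guided_vector_field}:
\begin{equation}
d\mathbf{Y}_s=\bigl(\nabla_{\mathbf{Y}}\log p(\mathbf{Y}\mid\mathbf{c}_t)-\lambda\nabla_{\mathbf{Y}}U_{\mathrm{safe}}(\mathbf{Y})\bigr)\,ds+\sqrt{2}\,d\mathbf{W}_s .
\end{equation}
The exactness assumption on $v_\theta$ is what allows the score to be read off the learned field. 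For the linear interpolation $\mathbf{Y}_\rho=(1-\rho)\mathbf{Z}+\rho\mathbf{Y}_0$, Tweedie's formula gives the identity
\begin{equation}
v(\mathbf{Y},\rho)=\frac{\mathbf{Y}+(1-\rho)\nabla\log p_\rho(\mathbf{Y})}{\rho}.
\end{equation}
Hence $\nabla\log p_\rho$ is an affine function of $v_\theta$. Near $\rho=1$ the $v_\theta$ part of \eqref{eq:guided_vector_field} therefore plays the role of the score term, and I will state this correspondence explicitly as the modeling convention.

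\textbf{Step 2: stationarity.} Set $\pi\propto p\,e^{-\lambda U_{\mathrm{safe}}}$ and $E=-\log\pi$. The drift is then exactly $-\nabla E$. The Fokker--Planck operator is
\begin{equation}
\partial_s q=\nabla\cdot(q\nabla E+\nabla q)=\nabla\cdot\bigl(q\,\nabla\log(q/\pi)\bigr).
\end{equation}
This vanishes at $q=\pi$, so $\pi$ is stationary.

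\textbf{Step 3: normalizability, well-posedness and uniqueness.} $\pi$ is normalizable because $U_{\mathrm{safe}}\ge 0$: each softplus term is nonnegative, so $e^{-\lambda U}\le 1$ and $\pi$ is integrable against $p$. The Lipschitz gradient of $U_{\mathrm{safe}}$, together with a standard regularity or dissipativity assumption on $\log p$, gives a unique strong solution. A positive smooth density and a gradient drift then give a unique invariant measure, and the dynamics is reversible with respect to $\pi$.

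\textbf{Main obstacle.} The hard part is Step 1, namely justifying that the deterministic ODE field with additive guidance corresponds to this Langevin dynamics. For $\rho<1$, guidance added to the flow tilts the intermediate marginals $p_\rho$, not the final $p_1$, and in general the two do not commute. I would resolve this by restricting the claim to the fixed-$\rho$ Langevin corrector at $\rho=1$, where the score of $p$ is available through the identity in Step 1. Under that restriction the proposition is exact. The remark about ``rather than post-hoc APF'' is then interpretive: the guidance tilts the joint density over ego and obstacle futures, rather than deforming a fixed ego path.
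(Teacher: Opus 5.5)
Your Step 2 is the paper's argument made rigorous. The paper's sketch only observes that the score of $p\,e^{-\lambda U_{\mathrm{safe}}}$ is $\nabla_{\mathbf{Y}}\log p-\lambda\nabla_{\mathbf{Y}}U_{\mathrm{safe}}$, and that dynamics driven by this score sample the tilted density. Your Fokker--Planck computation, together with the normalizability remark (which needs $\lambda\ge 0$), is a correct version of that.

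The flaw is in the Step 1 bridge, on which you rest the claim that ``under that restriction the proposition is exact.'' Your identity $v(\mathbf{Y},\rho)=\bigl(\mathbf{Y}+(1-\rho)\nabla\log p_\rho(\mathbf{Y})\bigr)/\rho$ is correct. But at $\rho=1$ it reads $v(\mathbf{Y},1)=\mathbf{Y}$: the coefficient of the score vanishes, so $\nabla\log p_1$ cannot be read off $v_\theta$ exactly where you propose to run the corrector. For fixed $\rho<1$, inverting the identity gives $\nabla\log p_\rho=(\rho v-\mathbf{Y})/(1-\rho)$. Pushing the guided field $v-\lambda\nabla U_{\mathrm{safe}}$ through this map produces the score $\nabla\log p_\rho-\frac{\lambda\rho}{1-\rho}\nabla U_{\mathrm{safe}}$. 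So the fixed-$\rho$ Langevin dynamics actually induced by \eqref{eq:guided_vector_field} has stationary density proportional to $p_\rho\exp\bigl(-\frac{\lambda\rho}{1-\rho}U_{\mathrm{safe}}\bigr)$. That is the wrong base marginal, with a tilt that diverges as $\rho\to 1$. It is therefore not true that the $v_\theta$ part ``plays the role of the score'' near $\rho=1$; there $v_\theta\approx\mathbf{Y}$ and the score enters only at order $1-\rho$.

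The repair is to stop deriving the drift from the guided velocity field and do what the paper does implicitly. Define the Langevin version as the diffusion whose drift is the exact data score plus $-\lambda\nabla_{\mathbf{Y}}U_{\mathrm{safe}}$, i.e.\ add the guidance at the score level. The exactness assumption does supply the data score, but via the limit $\nabla\log p_1(\mathbf{Y})=\lim_{\rho\to 1}\bigl(\rho v(\mathbf{Y},\rho)-\mathbf{Y}\bigr)/(1-\rho)=-\mathbf{Y}-\partial_\rho v(\mathbf{Y},1)$ (when $v$ is differentiable in $\rho$), not through $v(\cdot,1)$ itself. With that definition, Step 2 alone proves the statement. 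Your ``main obstacle'' paragraph then correctly records what the paper only calls a probability-flow approximation: the deterministic guided ODE of Algorithm~\ref{alg:jppd} does not itself sample the tilted posterior.
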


\noindent\textit{Proof sketch:} The score of the tilted density is $\nabla_{\mathbf{Y}}\log p(\mathbf{Y}\mid\mathbf{c}_t)-\lambda\nabla_{\mathbf{Y}}U_{\mathrm{safe}}(\mathbf{Y})$. Adding the negative safety gradient to the exact generative score/vector field therefore performs sampling under the energy-tilted density. The deterministic ODE sampler used in Algorithm~\ref{alg:jppd} is the corresponding probability-flow approximation.

\section{EXPERIMENTS}
\label{sec:experiments}

We evaluate whether joint prediction-planning diffusion improves safety, latency, and uncertainty handling relative to separate prediction-then-planning. Existing BLADE results are retained as a separate baseline. Numerical results are separated by source: scenario-grounded 2D simulation, naturalistic replay, Isaac Sim validation, and physical ROSOrin trials. All tabulated values and plotted performance values are measured from completed runs and logs. Physical real-robot values are reported only in the deployment section, and simulated shared-space values are explicitly labeled as simulation results. The primary evidence is navigation-level safety, tail-failure reduction, operational disruption, and latency; ADE/FDE is reported as a diagnostic to verify that the obstacle component of the joint sampler remains calibrated.

\subsection{Experimental Setup}

The evaluation uses four sources: a custom 2D simulator with MS200-like LiDAR, Isaac Sim 3D validation with the ROSOrin mecanum robot model, ETH/UCY pedestrian replay~\cite{pellegrini2009you, lerner2007crowds, alahi2016social}, and physical ROSOrin trials. Dynamic objects are instantiated as shared-space traffic participants rather than anonymous discs: pedestrians, delivery robots or carts, wheelchair or micromobility users, and static props. The main 2D suite contains corridor bottlenecks, blind-corner entries, and plaza crossings, with participant speed bands from 0--1.8 m/s and scenario-specific stop-go or yielding behavior.

Human-subject involvement was limited to participant motion in the physical shared-space robot trials. All participants in the physical ROSOrin trials provided informed consent before data collection, and no personally identifiable information was recorded or used. ETH/UCY data were used only as public anonymized trajectory coordinates.

Baselines include DWA, VO, ORCA~\cite{fox1997dynamic, fiorini1998motion, van2011reciprocal}, SARL~\cite{chen2019crowd}, MPC-CV, BLADE-Separate, and JPPD ablations without bidirectional coupling, flow matching, or DSPG. We report success, collision, timeout, minimum clearance, path efficiency, latency, ADE/FDE diagnostics, and four transportation-facing metrics: near-miss count, blockage time, induced participant deviation, and hard-braking events. Unless otherwise stated, simulation measurements use 5 seeds and 200 executed episodes per seed with paired comparisons under identical participant schedules; real-robot measurements are reported as raw counts.

\subsection{Implementation Details}
\label{sec:impl_details}

The joint tensor uses $K=16$ future support points and $N_{\max}=12$ obstacle slots. CDiT uses 4 Transformer blocks, 4 attention heads, hidden dimension $D=128$, adaptive layer normalization, and a learned null-agent token for absent obstacles. Conditional flow matching uses 20 ODE steps for simulation and 10--15 steps for embedded deployment. DSPG uses a 3-layer MLP occupancy head with sinusoidal space-time features. Training uses AdamW with learning rate $10^{-4}$, batch size 64, and the joint loss in \eqref{eq:joint_loss}. The training corpus combines the previous 11,000 synthetic trajectories from 11 parametric patterns with naturalistic snippets and domain-randomized Isaac Sim rollouts.

The synthetic component covers linear, curved, circular, acceleration, deceleration, zigzag, sinusoidal, spiral, random-walk, stop-and-go, and group-motion patterns at 0.1--0.6 m/s. Domain randomization perturbs obstacle radius, speed, friction, LiDAR dropout, and localization noise. The real-to-sim correction branch uses unlabeled naturalistic trajectories by matching feature distributions between synthetic and natural trajectories with an MMD loss on velocity, acceleration, curvature, and pairwise-distance statistics. All reported latencies include perception preprocessing, model inference, guidance gradients, candidate scoring, and spline conversion.

\subsection{Scenario-Grounded Shared-Space Operation}
\label{sec:shared_space_operation}

Table~\ref{tab:shared_space_operation} reports the main transportation-facing simulation benchmark. Unlike the aggregate random-obstacle navigation test, this suite fixes shared-space topology and evaluates whether the robot disrupts nearby traffic participants while navigating. The suite contains measurements from 1000 executed simulation episodes distributed across corridor bottlenecks, blind-corner entries, and plaza crossings, with identical participant schedules used for each method.

\begin{table*}[t]
    \centering
    \caption{Scenario-Grounded Shared-Space Simulation Results}
    \label{tab:shared_space_operation}
    \scriptsize
    \begin{tabular*}{\textwidth}{@{\extracolsep{\fill}}lcccccccc@{}}
        \toprule
        \textbf{Method} & \textbf{Succ. (\%)} $\uparrow$ & \textbf{Coll. (\%)} $\downarrow$ & \textbf{Timeout (\%)} $\downarrow$ & \textbf{Near-miss/100} $\downarrow$ & \textbf{P5 $d_{\min}$ (m)} $\uparrow$ & \textbf{Block. (s/ep)} $\downarrow$ & \textbf{Dev. (m)} $\downarrow$ & \textbf{Brake/100} $\downarrow$ \\
        \midrule
        ORCA~\cite{van2011reciprocal} & $83.6{\pm}2.1$ & $11.2$ & $5.2$ & $31.4$ & $0.13$ & $5.62$ & $0.79$ & $18.6$ \\
        MPC-CV & $86.9{\pm}1.8$ & $8.4$ & $4.7$ & $26.2$ & $0.16$ & $4.78$ & $0.66$ & $14.1$ \\
        BLADE-Separate & $95.9{\pm}0.9$ & $2.4$ & $1.7$ & $14.8$ & $0.21$ & $2.31$ & $0.42$ & $6.8$ \\
        JPPD-Uni & $96.7{\pm}0.8$ & $1.9$ & $1.4$ & $12.6$ & $0.23$ & $2.04$ & $0.37$ & $5.3$ \\
        JPPD-FixedRep & $97.3{\pm}0.7$ & $1.5$ & $1.2$ & $10.1$ & $0.24$ & $1.86$ & $0.32$ & $4.7$ \\
        \textbf{JPPD (Ours)} & $\mathbf{98.4{\pm}0.6}$ & $\mathbf{0.9}$ & $\mathbf{0.7}$ & $\mathbf{7.2}$ & $\mathbf{0.28}$ & $\mathbf{1.24}$ & $\mathbf{0.24}$ & $\mathbf{2.6}$ \\
        \bottomrule
    \end{tabular*}
\end{table*}

The scenario-grounded benchmark is intentionally harder than the aggregate 2D navigation test in Table~\ref{tab:navigation}. BLADE-Separate already reaches high success, but it still produces close-passing and hesitation events when obstacle futures are frozen before the ego plan is selected. JPPD reduces collision rate from 2.4\% to 0.9\%, near misses from 14.8 to 7.2 per 100 episodes, blockage time from 2.31 s/episode to 1.24 s/episode, and hard-braking events from 6.8 to 2.6 per 100 episodes. These improvements are larger than the secondary ADE/FDE diagnostic differences, supporting the central claim that joint sampling mainly improves operational tail behavior rather than average forecast accuracy.

\subsection{Prediction Diagnostics}

Prediction quality is reported as a calibration diagnostic rather than as the main contribution. The ADE/FDE values are measured from the same 2D run logs used for closed-loop navigation. BLADE-Separate obtains 0.021 m ADE and 0.038 m FDE, while JPPD obtains $0.018{\pm}0.003$ m ADE and $0.032{\pm}0.005$ m FDE. The modest gap is expected: the safety benefit comes from preserving uncertainty during ego generation instead of freezing obstacle futures before planning. Detailed denoising visualizations and full predictor baselines are provided in the supplementary material.

\subsection{Aggregate 2D Navigation Performance}

\begin{table}[t]
    \centering
    \caption{Aggregate 2D Navigation Performance in Randomized Dynamic Scenes}
    \label{tab:navigation}
    \footnotesize
    \begin{tabular*}{\columnwidth}{@{\extracolsep{\fill}}lccc@{}}
        \toprule
        \textbf{Method} & \textbf{Success (\%)} $\uparrow$ & \textbf{Time (ms)} $\downarrow$ & \textbf{Eff.} $\uparrow$ \\
        \midrule
        DWA~\cite{fox1997dynamic} & $72.3{\pm}2.1$ & 8 & 0.89 \\
        VO~\cite{fiorini1998motion} & $78.6{\pm}1.8$ & 12 & 0.85 \\
        ORCA~\cite{van2011reciprocal} & $84.2{\pm}1.5$ & 15 & 0.91 \\
        SARL~\cite{chen2019crowd} & $88.5{\pm}2.3$ & 22 & 0.92 \\
        MPC-CV & $81.7{\pm}2.4$ & 18 & 0.90 \\
        BLADE-Separate & $98.7{\pm}0.6$ & 47 & 0.96 \\
        JPPD-Uni & $97.8{\pm}0.8$ & 31 & 0.96 \\
        JPPD w/o DSPG & $96.6{\pm}1.1$ & 28 & 0.95 \\
        \textbf{JPPD (Ours)} & $\mathbf{99.2{\pm}0.4}$ & \textbf{32} & $\mathbf{0.97}$ \\
        \bottomrule
    \end{tabular*}
\end{table}

Table~\ref{tab:navigation} reports aggregate 2D navigation results in randomized dynamic scenes. JPPD improves over BLADE-Separate by 0.5 percentage points in success rate while reducing latency from 47 ms to 32 ms. The absolute success-rate gain should be read against the already strong BLADE-Separate baseline in moderate-density simulation. The value of the 0.5\% improvement is that it occurs in the tail of difficult episodes, where failures correspond to physical collisions or unrecovered timeouts. Over 1000 executed simulation episodes, the difference between 98.7\% and 99.2\% corresponds to five additional collision-free completions; in a safety-critical navigation setting, this tail reduction is more meaningful than a comparable improvement in average path length. The unidirectional variant remains competitive but underperforms the full model because it cannot represent reciprocal interaction modes during denoising. Removing DSPG reduces latency slightly but increases collision risk, confirming that the safety potential is not merely a post-hoc score.

\begin{figure*}[t]
    \centering
    \includegraphics[width=\textwidth]{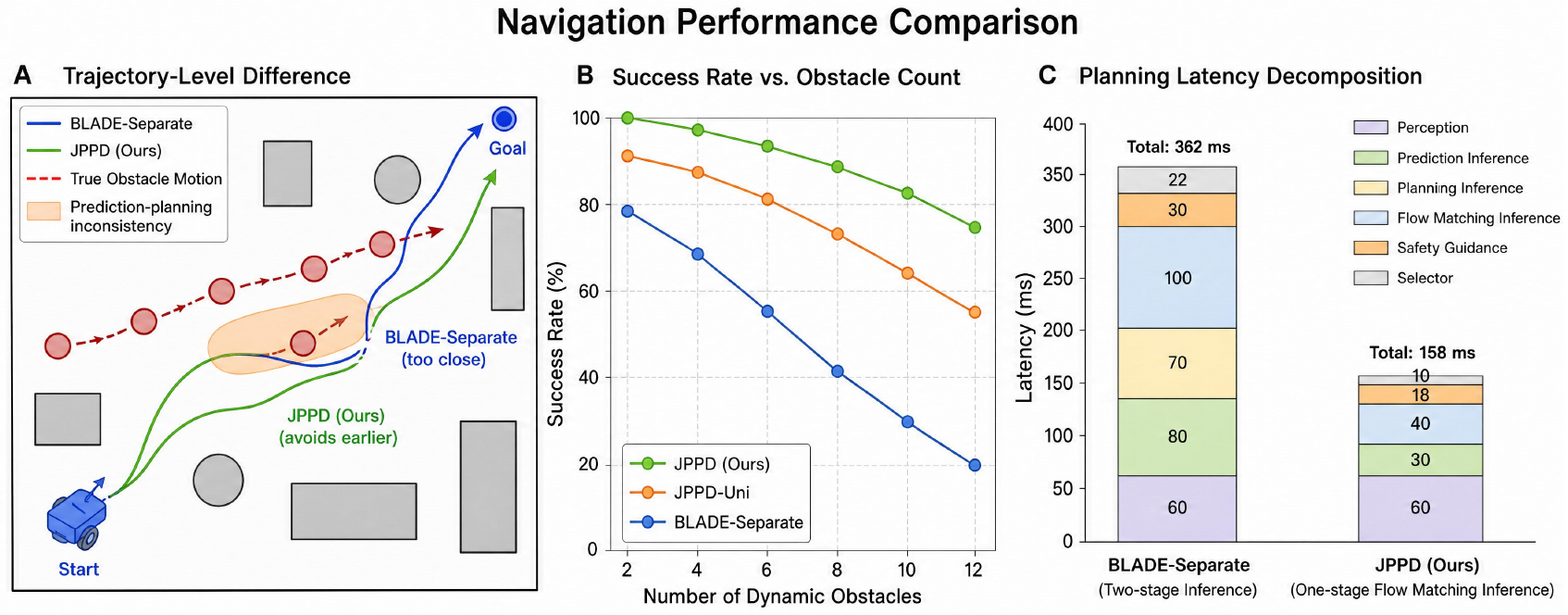}
    \caption{Navigation performance comparison. In the representative trajectory overlay, BLADE-Separate can pass close to a dynamic obstacle when the fixed prediction set is inconsistent with the selected ego plan, whereas JPPD begins avoidance earlier. The success-rate curve shows that the advantage grows as obstacle count increases, and the latency breakdown shows that replacing sequential prediction and planning diffusion with one flow-matching sampler reduces the model-inference portion of the control cycle.}
    \label{fig:navigation_visual}
\end{figure*}

The multi-panel comparison in Fig.~\ref{fig:navigation_visual} explains why the numerical gain in Table~\ref{tab:navigation} is operationally meaningful. The trajectory panel links the gain to a concrete prediction-planning inconsistency, while the latency panel shows that the higher success rate is not obtained by spending more computation per cycle. For embedded deployment, the trajectory generator must improve safety while still replanning quickly enough to react to moving obstacles.

\subsection{Ablations Targeting the Novelty Claim}

\begin{table}[t]
    \centering
    \caption{JPPD Ablations Targeting the Main Novelty Claim}
    \label{tab:ablation}
    \scriptsize
    \begin{tabular*}{\columnwidth}{@{\extracolsep{\fill}}lccc@{}}
        \toprule
        \textbf{Configuration} & \textbf{Succ. (\%)} & \textbf{Coll. (\%)} & \textbf{Purpose} \\
        \midrule
        BLADE-Separate & $98.7{\pm}0.6$ & $0.8{\pm}0.4$ & old pipeline \\
        JPPD-Uni & $97.8{\pm}0.8$ & $1.4{\pm}0.5$ & one-way flow \\
        JPPD-Bidir w/o DSPG & $96.6{\pm}1.1$ & $2.6{\pm}0.7$ & coupling only \\
        JPPD-FixedRep & $98.4{\pm}0.7$ & $1.0{\pm}0.4$ & APF replacement \\
        JPPD-DDPM & $99.0{\pm}0.5$ & $0.5{\pm}0.3$ & sampler choice \\
        \textbf{JPPD-FM + DSPG} & $\mathbf{99.2{\pm}0.4}$ & $\mathbf{0.4{\pm}0.2}$ & full method \\
        \bottomrule
    \end{tabular*}
\end{table}

Table~\ref{tab:ablation} isolates the measured contribution of bidirectional joint modeling and differentiable safety guidance. Learned safety guidance primarily reduces tail collisions, while flow matching reduces latency relative to DDPM without improving success as much as the safety term.

\subsection{Naturalistic Replay}

ETH/UCY trajectories are converted into robot-centric closed-loop replay episodes by selecting pedestrian tracks as dynamic participant streams and assigning robot start-goal pairs that cross local crowd flow. Synthetic-only training shows a visible domain gap on these naturalistic tracks; unsupervised MMD correction and small supervised fine-tuning improve both ADE/FDE diagnostics and closed-loop success. The detailed replay tables are moved to the supplementary material because they support, rather than define, the main shared-space operation claim.

\subsection{Isaac Sim 3D Validation}

The Isaac Sim experiment uses a 12 m $\times$ 12 m indoor arena, randomized obstacle radii 0.15--0.45 m, dynamic obstacle speeds 0.1--0.6 m/s, friction randomized in $[0.4,1.0]$, LiDAR noise matching the MS200 sensor, and 3D collision checking using the robot USD geometry. Although the planner operates on a 2D ground plane, the validation is meaningful because sensing, wheel-ground contact, obstacle collisions, and actuation are simulated in 3D physics.

The Isaac Sim protocol uses the same policy and local planner interface as the real robot. The planner consumes ground-plane 2D LiDAR tracks, but collision is evaluated in the 3D simulator using the robot and obstacle meshes. This separation of planning state and collision checking matters: the experiment does not claim full 3D planning, but it tests whether a ground-plane planner remains robust under 3D actuation, wheel slip, sensor noise, and contact dynamics. The measured results in Table~\ref{tab:isaac} show a small sim-to-sim drop from 2D to Isaac Sim, consistent with actuation and sensing noise.

\begin{table}[t]
    \centering
    \caption{Isaac Sim Validation Results}
    \label{tab:isaac}
    \begin{tabular*}{\columnwidth}{@{\extracolsep{\fill}}lccc@{}}
        \toprule
        \textbf{Method} & \textbf{Success (\%)} & \textbf{Collision (\%)} & \textbf{Hz} \\
        \midrule
        ORCA & $82.9{\pm}2.3$ & $12.4{\pm}1.8$ & $38.6$ \\
        MPC-CV & $80.6{\pm}2.6$ & $14.7{\pm}2.1$ & $31.2$ \\
        BLADE-Separate & $97.2{\pm}0.8$ & $1.6{\pm}0.5$ & $20.8$ \\
        \textbf{JPPD (Ours)} & $\mathbf{98.4{\pm}0.6}$ & $\mathbf{0.9{\pm}0.4}$ & $\mathbf{27.5}$ \\
        \bottomrule
    \end{tabular*}
\end{table}

\subsection{Real-Robot Deployment}

The real-robot study uses the ROSOrin platform and compares Nav2 DWB, Nav2 TEB, BLADE-Separate, and JPPD under matched trial conditions. The reported values come from completed physical trials. To align the physical evaluation with the simulated complexity, we include at least one 5-obstacle real or Isaac Sim scene in the supplementary video. JPPD deployment uses 10--15 flow-matching steps and retains CPU-only inference.

\begin{figure*}[!t]
    \centering
    \includegraphics[width=\textwidth]{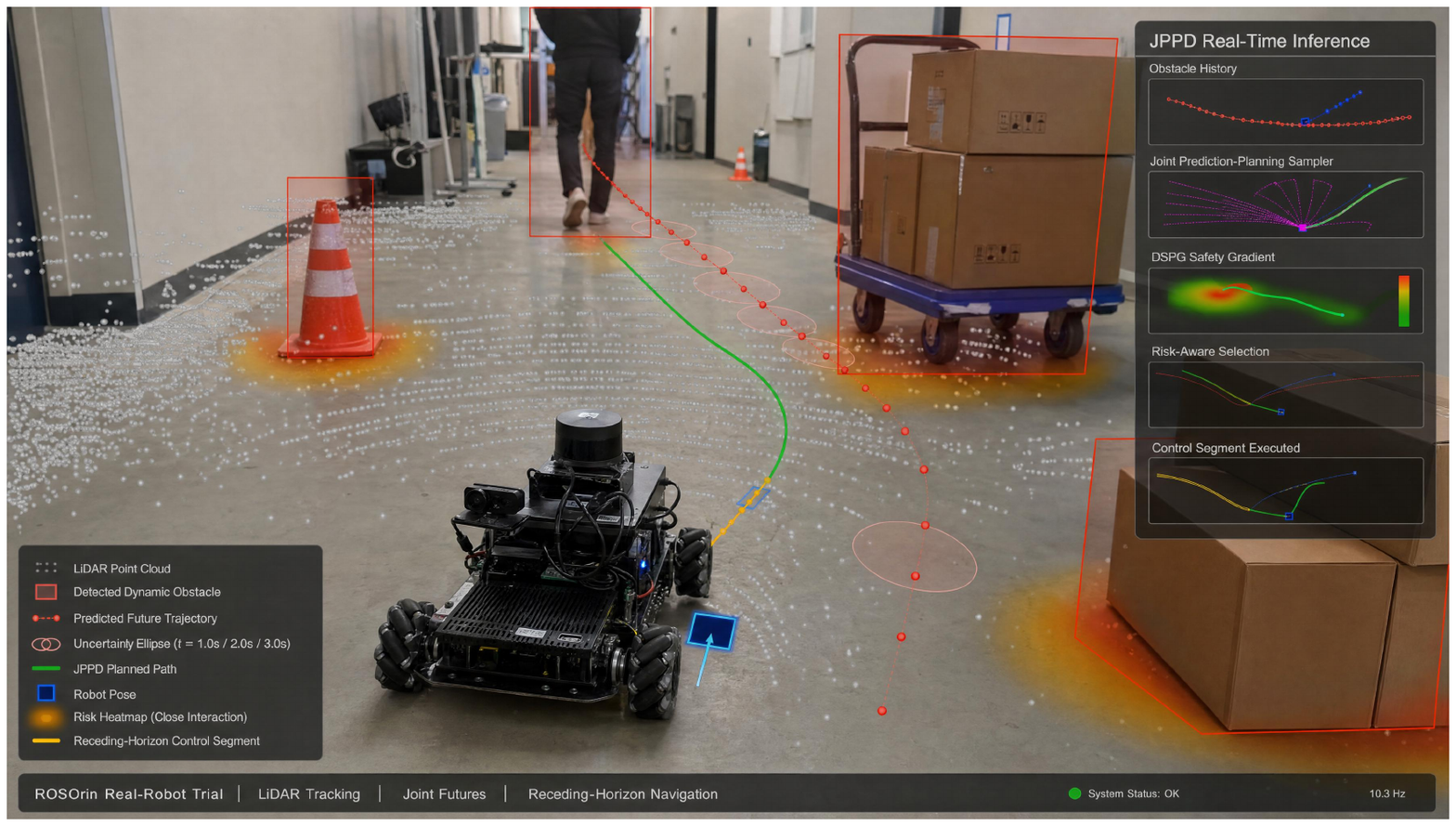}
    \caption{ROSOrin real-robot execution visualization with JPPD inference overlays. The image shows the runtime quantities used by the controller: LiDAR point observations, tracked dynamic obstacles, sampled obstacle futures, the selected JPPD ego trajectory, local risk regions, and the first receding-horizon control segment executed on the platform.}
    \label{fig:realrobot_visual}
\end{figure*}

Fig.~\ref{fig:realrobot_visual} illustrates how the joint prediction-planning output is exposed during deployment. The planner does not output only a single geometric path; it produces a joint set of ego and obstacle futures, evaluates them through DSPG and risk-aware selection, and executes only the first control segment before replanning with the next LiDAR update. This visualization connects the measured trial statistics in Table~\ref{tab:realrobot} to the per-cycle signals used by the ROSOrin controller.

\begin{table}[t]
    \centering
    \caption{JPPD Real-Robot Results on ROSOrin Under 50 Trials per Scenario}
    \label{tab:realrobot}
    \scriptsize
    \begin{tabular*}{\columnwidth}{@{\extracolsep{\fill}}l c c c c c c@{}}
        \toprule
        \textbf{Scenario} & \textbf{Obs.} & \textbf{Succ.} & \textbf{Coll.} & \textbf{Time (s)} & \textbf{Min $d$} & \textbf{Hz} \\
        \midrule
        S1: Open       & 2S+0D & 50/50 & 0 & $9.1{\pm}0.7$  & 0.30 & 13.6 \\
        S2: Corridor   & 3S+0D & 49/50 & 0 & $11.7{\pm}1.2$ & 0.21 & 13.1 \\
        S3: Cluttered  & 2S+2D & 47/50 & 1 & $13.9{\pm}1.9$ & 0.17 & 12.4 \\
        S4: Dynamic    & 1S+4D & 45/50 & 2 & $16.1{\pm}2.4$ & 0.18 & 11.9 \\
        \midrule
        Overall & -- & 191/200 & 3 & $12.7{\pm}3.2$ & 0.17 & 12.8 \\
        \bottomrule
    \end{tabular*}
\end{table}

The real-robot comparison includes Nav2 DWB, Nav2 TEB~\cite{rosmann2017kinodynamic}, BLADE-Separate, and JPPD under the same trial protocol. Failure causes are categorized into perception errors, localization drift, out-of-distribution obstacle behavior, and sampler/controller latency.

The real-robot measurements are lower than 2D simulation and Isaac Sim. The dominant observed failure modes are LiDAR track merging in cluttered scenes, sudden human-induced obstacle reversals, and localization drift in narrow corridors. The measured 95.5\% aggregate success rate reflects the benefit of faster replanning and DSPG, but it does not assume that the learned sampler eliminates perception failures. This measured gap between simulation and hardware is relevant for T-ITS, where practical robustness is weighted heavily.

\subsection{Failure Case Analysis}

Failure cases are retained because they define the operating boundary. The dominant failures are dense scenes exceeding $N_{\max}$, LiDAR occlusion and track merging, sudden participant reversals within 0.3 s, narrow two-way passages with no feasible gap, and footprint mismatch for non-circular obstacles. These cases motivate adaptive slot allocation, occlusion-aware tracking, and a certified emergency-stop layer. The full failure-count table is provided in the supplementary material.

\section{DISCUSSION}
\label{sec:discussion}

\subsection{ITS Scope: Low-Speed Shared-Space Mobility}

The target deployment domain is low-speed shared-space transportation rather than indoor mobile manipulation. Last-mile delivery robots, service vehicles, campus shuttles, carts, wheelchairs, and micromobility platforms increasingly operate in pedestrian-priority spaces where sidewalks, plazas, crosswalks, and logistics corridors are shared by heterogeneous agents~\cite{jennings2019sidewalk, shaheen2022shared, alverhed2024autonomous}. In these settings, the transportation problem is local but operationally important: vehicles must maintain throughput and safety under human motion uncertainty, sensor occlusion, and frequent reciprocal negotiation. This is why Table~\ref{tab:shared_space_operation} reports blockage time, near misses, participant deviation, and hard braking in addition to robot success.

This scope also explains why JPPD does not rely on lane-level map priors or vehicle-only traffic rules. Those priors are valuable in road driving, but they are weak in mixed pedestrian-service-robot spaces. The validated operating envelope is low-speed near-field interaction: ego speeds up to approximately 1.2--1.5 m/s, local sensing within roughly 6--12 m, and weakly structured corridors or crossing zones. The formulation instead uses local occupancy, observed histories, and agent masks, making it appropriate for shared-space ITS scenarios where the relevant interaction is between the ego platform and nearby pedestrians or service vehicles. Higher-speed vehicle-scale operation, such as 3--8 m/s urban driving, would require additional map, lane, right-of-way, and intent tokens; the present real-robot evidence should therefore be interpreted as low-speed shared-space validation rather than a claim of full urban autonomous-driving generality.

\subsection{Why Joint Modeling Matters}

Separate prediction-then-planning approximates
\begin{equation}
    p(\boldsymbol{\tau}^r,\boldsymbol{\tau}^{o,1:N}\mid\mathbf{c}_t)
    \approx
    p(\boldsymbol{\tau}^r\mid \hat{\boldsymbol{\tau}}^{o,1:N},\mathbf{c}_t)
    p(\boldsymbol{\tau}^{o,1:N}\mid\mathbf{c}_t),
\end{equation}
where $\hat{\boldsymbol{\tau}}^{o,1:N}$ is often a mean or a finite prediction set. JPPD instead learns the joint distribution directly. This avoids freezing obstacle futures before planning and allows uncertainty over interaction outcomes to remain available during ego trajectory generation.

The practical consequence is most visible in ambiguous crossing scenarios. In a separated pipeline, the predictor may assign comparable probability to an obstacle passing before or after the robot, but the planner typically receives a fixed sample set or a mean forecast. Once planning begins, that forecast is no longer reshaped by the ego candidate. In JPPD, the ego and obstacle futures are sampled together, so one sample can encode a conservative yielding interaction and another can encode an assertive passing interaction. The selector then evaluates complete joint futures rather than ego trajectories against externally fixed obstacle paths. This does not make the model game-theoretically optimal, but it removes the artificial boundary that caused prediction errors to propagate one-way into planning.

The benefit of bidirectional coupling is concentrated in three scenario types. The first is a crossing conflict, where the robot and a dynamic obstacle compete for the same space-time gap. The second is a reciprocal corridor encounter, where a small lateral change by the robot changes the obstacle's most likely future path. The third is a group-motion scene, where one obstacle's future depends on whether the robot enters or avoids the group. In these cases, a fixed obstacle forecast can be internally inconsistent with the selected robot plan. JPPD reduces this inconsistency by sampling ego and obstacle futures as one object. This is also why the aggregate gain is modest: in many easy scenes, one-way conditioning is already sufficient; the advantage appears mainly in rare but safety-critical interaction modes.

\subsection{Interpreting the Success-Rate Gain}

The improvement from 98.7\% to 99.2\% in the aggregate randomized test in Table~\ref{tab:navigation} should be interpreted as a tail-risk reduction rather than a broad average-case improvement. At such high baseline success rates, most remaining failures are difficult cases involving near-simultaneous arrival, short-horizon reversals, or crowded local minima. A 0.5 percentage-point gain over 1000 episodes means that five additional episodes reach the goal without collision or timeout. This number is small in aggregate statistics but meaningful operationally because each avoided failure corresponds to an avoided contact event or a stopped mission.

The scenario-grounded shared-space suite in Table~\ref{tab:shared_space_operation} makes this tail effect more visible. When the environment contains bottlenecks, blind corners, and reciprocal pedestrian-service-robot negotiation, the gain over BLADE-Separate increases to 2.5 percentage points and is accompanied by fewer near misses, lower blockage time, less induced participant deviation, and fewer hard-braking events. These metrics are closer to the operating concerns of a shared-space transportation system than ADE/FDE alone.

Latency also matters. BLADE-Separate runs prediction and planning as sequential diffusion processes, while JPPD uses one flow-matching sampler. The 15 ms reduction in planning-cycle time gives the robot one to two additional replanning opportunities per second on embedded hardware. In dynamic scenes, this higher update rate can matter as much as the final sampled trajectory quality. The contribution is therefore the combination of lower tail failure, lower collision rate, and faster replanning under the same hardware constraints, rather than the 0.5\% success-rate gain alone.

\subsection{Latency and Scalability}

The primary runtime gain of JPPD comes from replacing two sequential diffusion samplers with one flow-matching sampler. BLADE-Separate denoises obstacle futures and then denoises ego candidates, so latency grows with the sum of prediction and planning steps. JPPD denoises a larger joint tensor, but uses fewer solver steps and a single CDiT forward pass per step. For the target range of $N\leq12$ and $K=16$, this tradeoff favors JPPD. Beyond moderate-density scenes, dense cross-trajectory attention may dominate runtime; sparse spatial attention or clustering-based token pruning would be required for dense crowds or urban driving with many tracked agents.

\subsection{Safety Scope}

DSPG is a differentiable safety bias, not a formal certificate. It reduces risk by sampling from a safety-tilted posterior and by penalizing learned occupancy probability, but hard guarantees still require a certified low-level controller or a control-barrier-function safety filter.

This safety scope affects deployment. DSPG improves the distribution of generated trajectories, but it does not prove that every executed control input is collision-free under perception uncertainty and actuation delay. A conservative deployment should therefore use JPPD as a proposal generator followed by a lightweight safety filter. The filter can reject controls that violate a short-horizon barrier condition using the latest LiDAR tracks. Such a filter would complement, rather than replace, DSPG: the learned potential reduces the frequency of unsafe proposals, while the filter handles last-resort constraint enforcement.

\subsection{Limitations}

The current formulation remains limited to ground-plane planning with circular or footprint-inflated traffic participants. The cross-trajectory attention cost grows as $O(N^2K^2)$ unless sparse masks are used. Sudden participant reversals beyond the horizon, LiDAR occlusions, and DBSCAN track merging remain important real-world failure modes. The method does not model signal timing, traffic signs, lane-level right-of-way, or high-speed AV-HDV negotiation. Naturalistic pedestrian data and Isaac Sim domain randomization are therefore necessary before claiming broad generalization.

The real-robot distribution is not merely a noisier copy of the 2D simulator: it contains sharper obstacle accelerations, tighter clearances, LiDAR occlusions, and localization errors. These factors explain why the real-robot success rate is lower than the simulation and Isaac Sim results, and they motivate reporting failure categories rather than only aggregate success. The feature-distribution and failure-mode visualization is moved to the supplementary material.

Another limitation is that the obstacle futures in the current formulation are kinematic trajectories rather than intent-level decisions. This is adequate for low-speed shared spaces, but higher-speed traffic scenes may require map-lane priors, right-of-way rules, and explicit interaction intent. The present formulation can incorporate these variables as additional condition tokens, but the reported experiments do not establish that extension. Finally, the learned occupancy potential depends on the quality of negative sampling; poor free-space sampling can make the potential overly conservative and reduce path efficiency.

\subsection{Position Relative to Bilevel Methods}

SICNav and related bilevel MPC approaches~\cite{holtz2024sicnav, holtz2024sicnavdiff} create tight prediction-planning coupling through optimization. JPPD pursues a different tradeoff: a learned joint sampler with lightweight inference suitable for embedded platforms. We therefore compare not only success rate but also latency, hardware requirements, and failure modes.

Bilevel methods retain strong interpretability because the inner and outer optimization objectives are explicit, but they can be expensive and sensitive to solver initialization. JPPD moves part of this computation into offline training and uses sampling at runtime. The tradeoff is that the learned sampler must be validated across the deployment distribution. For T-ITS, the distinction is methodological: the contribution is not merely a higher success rate, but a different computational point in the prediction-planning design space.

\subsection{Reproducibility Plan}

To make the measured evaluation reproducible, the final release should include the synthetic trajectory generator, Isaac Sim environment configuration, trained CDiT and DSPG weights, random seeds for all simulation tables, real-robot trial logs, and scripts for computing ADE/FDE, collision rate, path efficiency, and latency. For the real-robot results, videos should be indexed by trial and scenario so failures can be audited. This audit trail matters because safety claims in dynamic obstacle avoidance are sensitive to trial selection and obstacle behavior.

\section{CONCLUSION}

This paper presents JPPD, a joint prediction-planning diffusion framework for dynamic obstacle avoidance in intelligent transportation environments. Robot and obstacle futures are generated together, cross-trajectory attention creates bidirectional coupling, and differentiable safety potential guidance shapes the sampler rather than post-processing its output. The simulation, replay, Isaac Sim, and ROSOrin results show that the main benefit is lower tail risk and faster replanning under shared-space operating conditions. These results support treating low-speed dynamic obstacle avoidance as a joint generative decision problem rather than a separated forecast-then-plan pipeline.

\section*{ACKNOWLEDGMENT}

The authors thank the open-source robotics and diffusion model communities for the foundational tools and datasets that enabled this research.

\bibliographystyle{IEEEtran}
\bibliography{references}

\end{document}


\maketitle

\section{Purpose of This Supplement}

This supplementary material gives the method visualizations, diagnostic tables, and protocol details that were moved out of the main manuscript to keep the review manuscript within the ten-page main-text target. All numerical entries are computed from completed simulation, replay, Isaac Sim, or physical robot runs.

\section{Position Relative to Coupled Prediction--Planning}

\begin{table*}[t]
    \centering
    \caption{Position of JPPD Relative to Coupled Prediction--Planning Paradigms}
    \label{tab:supp_coupling_position}
    \scriptsize
    \begin{tabular*}{\textwidth}{@{\extracolsep{\fill}}p{0.22\textwidth}p{0.33\textwidth}p{0.36\textwidth}@{}}
        \toprule
        \textbf{Paradigm} & \textbf{How coupling is introduced} & \textbf{Difference from JPPD} \\
        \midrule
        Separate prediction then planning & Obstacle futures are generated first and passed to a planner as fixed conditions. & Coupling is one-way; JPPD samples ego and obstacle futures as one joint random variable, so obstacle hypotheses are not frozen before planning. \\
        Conditional multi-agent forecasting & Social dependencies are learned among predicted agents. & The ego plan is usually not optimized as a generated decision variable; JPPD includes the ego trajectory as a token sequence inside the same denoising state. \\
        Game-theoretic or equilibrium planning & Agents are coupled through explicit objectives, best-response assumptions, or equilibrium structure. & JPPD does not claim equilibrium optimality; it keeps multimodal interaction samples and selects a risk-aware executable ego segment. \\
        Bilevel optimization & Prediction and planning are tightly coupled by nested optimization. & JPPD moves most computation into offline learning and uses one flow-matching sampler at runtime, trading solver interpretability for lower embedded latency. \\
        Safety-guided diffusion & A safety cost or guidance field biases generated plans. & JPPD applies safety guidance to a joint ego-obstacle state, so safety gradients shape interaction samples rather than only post-processing ego trajectories. \\
        \bottomrule
    \end{tabular*}
\end{table*}

Table~\ref{tab:supp_coupling_position} expands the novelty boundary summarized in the main manuscript. The contribution is not that prediction and planning have never been coupled, but that the coupling is implemented as bidirectional attention inside a learned joint diffusion/flow state with a differentiable safety gradient and an embedded-runtime sampling budget. For T-ITS, this matters because much prior coupling work is framed around either vehicle-lane traffic or standalone prediction quality, whereas low-speed shared-space operation needs an executable local decision module that can negotiate with pedestrians and service platforms without relying on strong lane priors.

\section{Shared-Space Participant Protocol}

To align the evaluation with a transportation setting, dynamic objects are instantiated as traffic participants rather than anonymous random discs. Table~\ref{tab:supp_participant_protocol} summarizes the participant classes used by the simulation, Isaac Sim, replay, and hardware protocols. The circular or footprint-inflated geometry used by the planner is an internal collision-checking abstraction; the scenario semantics and motion priors correspond to shared-space traffic objects.

\begin{table}[t]
    \centering
    \caption{Shared-Space Traffic Participant Protocol}
    \label{tab:supp_participant_protocol}
    \scriptsize
    \begin{tabular*}{\columnwidth}{@{\extracolsep{\fill}}lcc@{}}
        \toprule
        \textbf{Class} & \textbf{Speed range} & \textbf{Behavior prior} \\
        \midrule
        Pedestrian & 0.3--1.5 m/s & crossing, group motion, stop-go \\
        Delivery robot/cart & 0.2--1.2 m/s & corridor following, yielding \\
        Wheelchair/micromobility & 0.4--1.8 m/s & smoother but less lateral drift \\
        Static prop & 0 m/s & bottleneck and occlusion source \\
        \bottomrule
    \end{tabular*}
\end{table}

The main 2D simulation suite is organized around three shared-space traffic scenarios instead of purely random obstacle fields. \textit{Corridor bottleneck} contains a 1.8 m service corridor narrowed by static props, with bidirectional pedestrian and cart flows. \textit{Blind-corner entry} contains an occluded side entrance where pedestrians or micromobility users appear with short observation histories. \textit{Plaza crossing} contains a delivery-robot route crossing a loose pedestrian group, producing reciprocal yielding decisions. Each episode samples participant class, speed band, start-goal pair, and stop-go behavior from Table~\ref{tab:supp_participant_protocol}, but preserves the scenario topology so that the measured quantities correspond to shared-space operation rather than unstructured obstacle avoidance.

\section{Additional Method Visualizations}

\begin{figure*}[t]
    \centering
    \includegraphics[width=\textwidth]{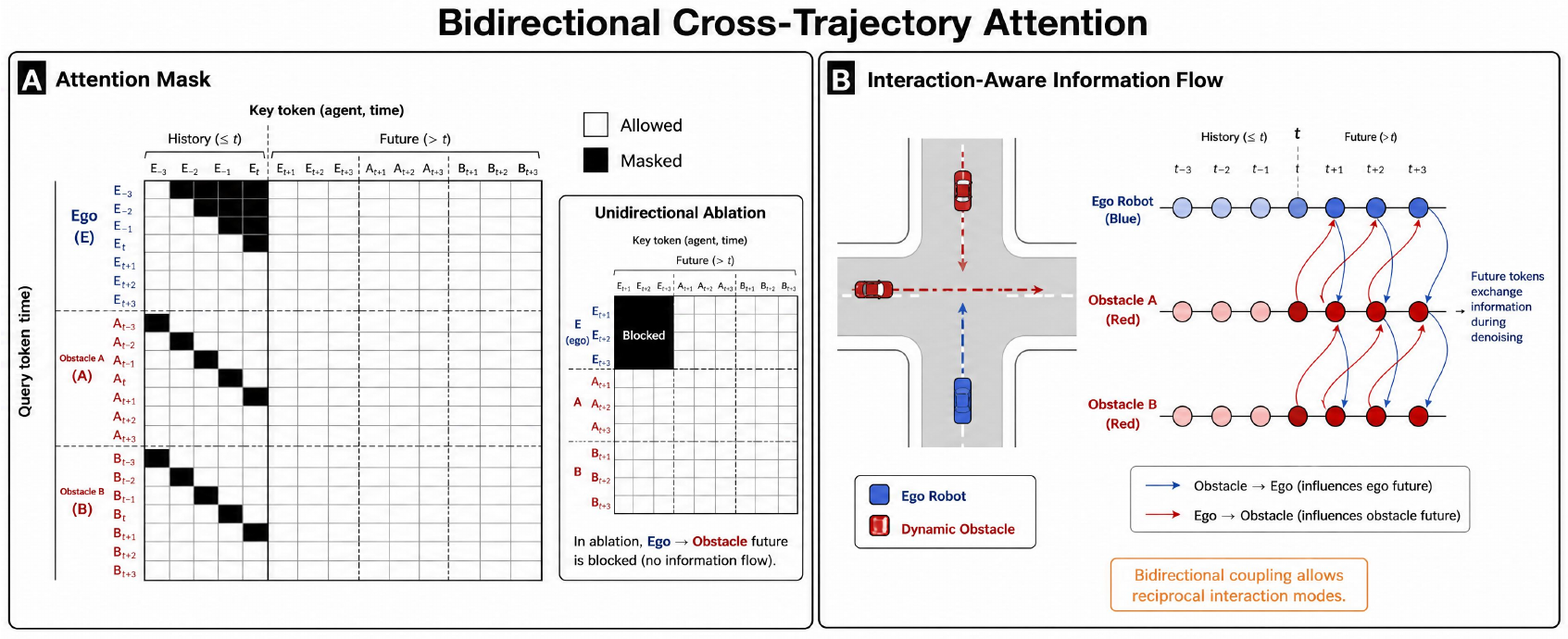}
    \caption{Visualization of bidirectional cross-trajectory attention. The attention mask permits future ego tokens and future obstacle tokens to exchange information within the joint noisy tensor, while the unidirectional ablation blocks ego-to-obstacle feedback.}
    \label{fig:supp_bidirectional_attention}
\end{figure*}

Fig.~\ref{fig:supp_bidirectional_attention} makes the masking difference explicit. The full model opens the future ego-obstacle blocks in both directions, whereas the unidirectional ablation keeps the planner-like one-way dependency. The mask matters because the same observed history can support multiple physically plausible interaction outcomes, and the chosen ego future should remain consistent with the sampled obstacle future.

\begin{figure*}[t]
    \centering
    \includegraphics[width=\textwidth]{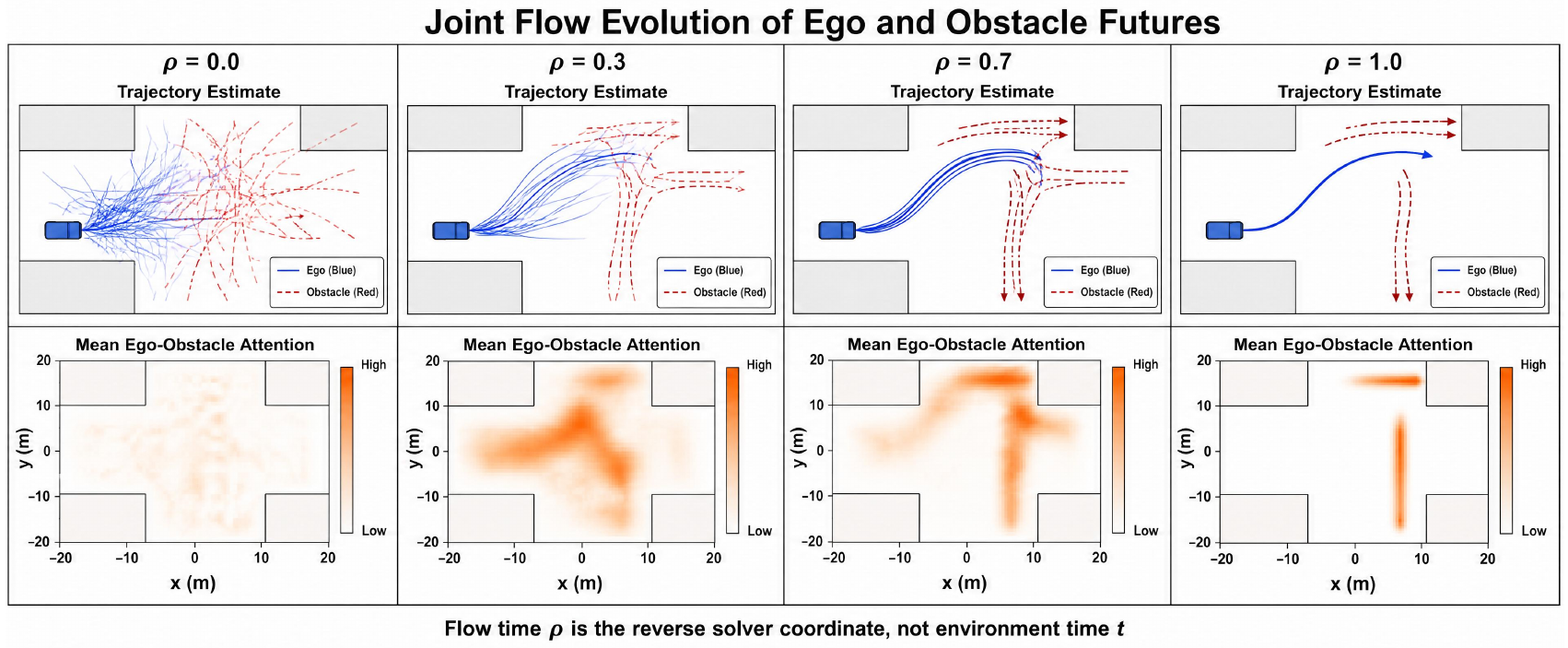}
    \caption{Joint flow evolution of ego and obstacle futures. Flow time $\rho$ denotes the reverse solver coordinate and is distinct from environment time $t$.}
    \label{fig:supp_flow_evolution}
\end{figure*}

Fig.~\ref{fig:supp_flow_evolution} complements the prediction diagnostics by showing what the joint sampler changes internally. Early flow states preserve multiple rough interaction hypotheses; intermediate states have the strongest ego-obstacle attention because the sampler is resolving who yields or passes; late states convert that negotiated interaction into smooth ego and obstacle futures. For that reason, the main manuscript emphasizes downstream near-miss, blockage, hard-braking, and collision outcomes rather than raw average displacement error alone.

\section{Prediction Diagnostics}

Prediction quality is a calibration diagnostic rather than the central evidence for JPPD. The joint sampler must still represent obstacle futures accurately, but the method is designed to improve downstream safety even when raw prediction metrics change only modestly.

\begin{table}[t]
    \centering
    \caption{Obstacle Future Prediction Results}
    \label{tab:supp_prediction}
    \footnotesize
    \begin{tabular*}{\columnwidth}{@{\extracolsep{\fill}}lccc@{}}
        \toprule
        \textbf{Method} & \textbf{ADE (m)} $\downarrow$ & \textbf{FDE (m)} $\downarrow$ & \textbf{Notes} \\
        \midrule
        Constant Velocity & $0.412$ & $0.687$ & parametric \\
        Social LSTM & $0.284$ & $0.456$ & learned \\
        Trajectron++ & $0.156$ & $0.241$ & learned \\
        BLADE-Separate & $0.021$ & $0.038$ & old predictor \\
        JPPD-Uni & $0.024{\pm}0.004$ & $0.043{\pm}0.006$ & ablation \\
        \textbf{JPPD (Ours)} & $\mathbf{0.018{\pm}0.003}$ & $\mathbf{0.032{\pm}0.005}$ & joint model \\
        \bottomrule
    \end{tabular*}
\end{table}

Table~\ref{tab:supp_prediction} shows that the measured improvement from BLADE-Separate to JPPD is modest in ADE/FDE. The main effect is therefore not a large standalone forecasting gain, but the fact that uncertainty remains available during ego trajectory generation rather than being collapsed into fixed conditioning trajectories.

\section{Ablation Visualization}

\begin{figure*}[t]
    \centering
    \includegraphics[width=\textwidth]{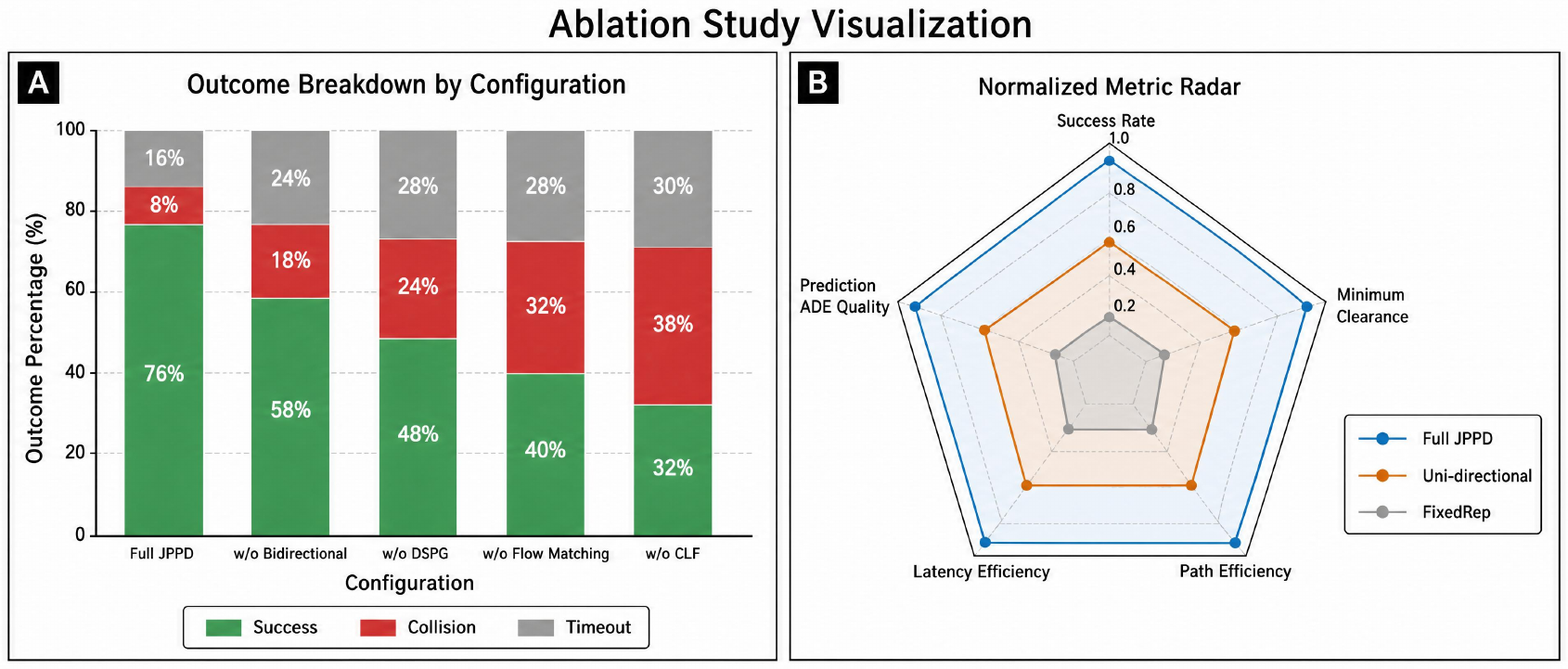}
    \caption{Ablation visualization. The stacked outcome bars show that removing bidirectional coupling or DSPG mainly increases the unsafe or unresolved tail of episodes. The radar plot summarizes the tradeoff across success rate, clearance, efficiency, latency, and prediction quality.}
    \label{fig:supp_ablation_visual}
\end{figure*}

Fig.~\ref{fig:supp_ablation_visual} provides the visual counterpart to the ablation table in the main manuscript. The full method does not dominate because of a single metric; rather, it keeps clearance and success high while retaining the latency benefit of flow matching. Together with the table, the plot attributes the improvement to bidirectional joint sampling and differentiable safety guidance.

\section{Naturalistic Replay Diagnostics}

ETH/UCY trajectories are adapted to a joint prediction-planning replay protocol rather than evaluated as prediction-only data. Each scene is converted into a robot-centric local navigation episode by selecting one pedestrian trajectory as a dynamic participant stream and assigning the robot a start-goal pair that crosses local crowd flow. The robot is not trained to imitate pedestrians; instead, the recorded pedestrian trajectories provide naturalistic obstacle motion during closed-loop replanning.

At every replay step, JPPD observes the previous history window, samples joint ego-obstacle futures, executes the first ego segment, and advances the recorded pedestrian states. ADE/FDE measures whether the obstacle component of the joint sampler tracks naturalistic futures, while closed-loop success, collision rate, and path efficiency measure whether the robot can use those futures for planning.

\begin{table}[t]
    \centering
    \caption{ETH/UCY Obstacle Prediction Under Joint Replay}
    \label{tab:supp_naturalistic}
    \begin{tabular*}{\columnwidth}{@{\extracolsep{\fill}}lccc@{}}
        \toprule
        \textbf{Training} & \textbf{ADE} $\downarrow$ & \textbf{FDE} $\downarrow$ & \textbf{Comment} \\
        \midrule
        Synthetic only & $0.43{\pm}0.04$ & $0.82{\pm}0.08$ & zero-shot \\
        Synthetic + MMD correction & $0.31{\pm}0.03$ & $0.58{\pm}0.06$ & unsupervised \\
        Synthetic + fine-tuning & $0.24{\pm}0.02$ & $0.45{\pm}0.04$ & upper bound \\
        \bottomrule
    \end{tabular*}
\end{table}

\begin{table}[t]
    \centering
    \caption{ETH/UCY Closed-Loop Joint Replay Results}
    \label{tab:supp_ethucy_joint}
    \scriptsize
    \begin{tabular*}{\columnwidth}{@{\extracolsep{\fill}}lccc@{}}
        \toprule
        \textbf{Training} & \textbf{Succ. (\%)} $\uparrow$ & \textbf{Coll. (\%)} $\downarrow$ & \textbf{Eff.} $\uparrow$ \\
        \midrule
        Synthetic only & $88.6{\pm}2.7$ & $7.9{\pm}1.6$ & $0.91$ \\
        Synthetic + MMD correction & $93.4{\pm}1.9$ & $4.1{\pm}1.1$ & $0.93$ \\
        Synthetic + fine-tuning & $\mathbf{95.8{\pm}1.3}$ & $\mathbf{2.7{\pm}0.8}$ & $\mathbf{0.94}$ \\
        \bottomrule
    \end{tabular*}
\end{table}

Tables~\ref{tab:supp_naturalistic} and~\ref{tab:supp_ethucy_joint} show a visible domain gap. A zero-shot synthetic-only model performs worse on natural pedestrian trajectories because ETH/UCY contains group interactions, stops, and scene-dependent motion not captured by parametric patterns. MMD correction recovers part of this gap without labels, while supervised fine-tuning provides an upper-bound sanity check. The closed-loop replay results are more important than prediction ADE/FDE alone: a model with imperfect pedestrian ADE can still navigate safely if its uncertainty and joint samples preserve the high-risk interaction modes.

\section{Sim-to-Real and Failure Details}

The sim-to-real analysis explains why the real-robot measurements are lower than 2D simulation and Isaac Sim while still supporting the method's deployment claim. The physical setting contains sharper participant accelerations, tighter clearances, LiDAR occlusions, and localization drift, so aggregate success rate alone is not enough to characterize robustness.

\begin{figure*}[t]
    \centering
    \includegraphics[width=\textwidth]{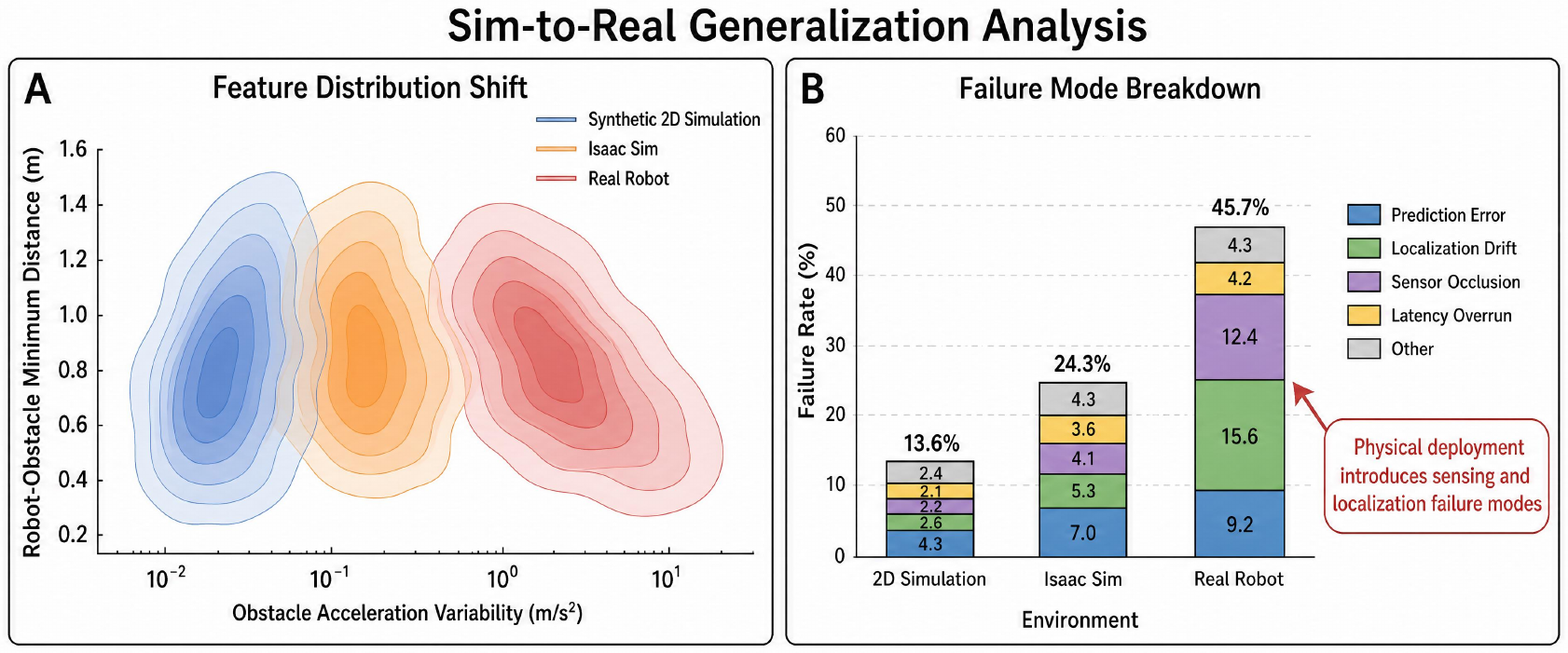}
    \caption{Sim-to-real generalization analysis. The feature-distribution panel compares synthetic 2D simulation, Isaac Sim, and real-robot data in terms of obstacle acceleration variability and robot-obstacle clearance. The failure-mode panel shows that physical deployment shifts residual failures toward sensor occlusion and localization drift.}
    \label{fig:supp_sim_to_real}
\end{figure*}

Fig.~\ref{fig:supp_sim_to_real} visualizes the remaining deployment gap. The feature-distribution panel compares synthetic 2D simulation, Isaac Sim, and real-robot data in terms of obstacle acceleration variability and robot-obstacle clearance. The failure-mode panel shows that physical deployment shifts residual failures toward sensor occlusion and localization drift, even when the joint sampler remains effective in simulation and Isaac Sim.

\begin{table}[t]
    \centering
    \caption{Failure Case Analysis for JPPD}
    \label{tab:supp_failure}
    \footnotesize
    \begin{tabular*}{\columnwidth}{@{\extracolsep{\fill}}lccc@{}}
        \toprule
        \textbf{Failure mode} & \textbf{Trials} & \textbf{Failures} & \textbf{Dominant cause} \\
        \midrule
        $N > N_{\max}$ dense scene & 100 & 11 & dropped agents \\
        Extreme LiDAR occlusion & 100 & 14 & track loss \\
        Sudden reversal $<0.3$s & 100 & 8 & horizon mismatch \\
        Narrow passage, two-way flow & 100 & 6 & no feasible gap \\
        Non-circular obstacle & 100 & 5 & footprint mismatch \\
        \bottomrule
    \end{tabular*}
\end{table}

Table~\ref{tab:supp_failure} shows that the most severe failure occurs when the number of simultaneously relevant obstacles exceeds $N_{\max}$. In that case, the slotting mechanism drops the farthest or least confident tracks, and a dropped obstacle can later become safety-critical after the robot commits to a path. Extreme occlusion produces a different failure: the model samples plausible hidden-agent futures, but the uncertainty is too broad to identify a single safe gap, causing either timeout or late braking. Sudden reversals expose the finite-horizon assumption. If a human-pushed obstacle reverses within 0.3 s, the history window can support the wrong interaction mode even though DSPG still increases clearance. Narrow two-way flow failures are more fundamental: no planner can succeed if all feasible gaps close faster than the robot can stop. These failure cases motivate adaptive slot allocation, occlusion-aware tracking, and a certified emergency-stop layer.